\newtheorem{theorem}{theorem}
\newtheorem{conjecture}{Conjecture}
\newtheorem{definition}{Definition}
\begin{document}

\title{On the Development of Binary Classification Algorithm Based on Principles of Geometry and Statistical Inference}

\author{ Vatsal Srivastava \\ 
        Department of Computational Intelligence\\
       School of Computing\\
       SRM Institute of Science and Technology\\
       Chennai, Tamil Nadu, India}

\maketitle

\begin{abstract}
The aim of this paper is to investigate an attempt to build a binary classification algorithm using principles of geometry such as vectors, planes, and vector algebra. The basic idea behind the proposed algorithm is that a hyperplane can be used to completely separate a given set of data points mapped to n dimensional space, if the given data points are linearly separable in the n dimensions. Since points are the foundational elements of any geometrical construct, by manipulating the position of points used for the construction of a given hyperplane, the position of the hyperplane itself can be manipulated. The paper includes testing data against other classifiers on a variety of standard machine learning datasets. With a focus on support vector machines, since they and our proposed classifier use the same geometrical construct of hyperplane, and the versatility of SVMs make them a good bench mark for comparison. Since the algorithm focuses on moving the points through the hyperspace to which the dataset has been mapped, it has been dubbed as moving points algorithm. 
\end{abstract}

\begin{center}
  \textbf{Keywords:} hyperplane, vector algebra, binary classification, geometry
\end{center}

\section{Introduction}
The first mathematical model of a neuron was described by Warren McCulloch and Walter Pitts in 1943. This was followed by Frank Rosenblatt's implementation of a perceptron (\cite{rosenblatt_perceptron_1957}). It was further improved upon by Prof. Bernard Widrow and his student Ted Hoff with the introduction of adaptive linear neuron (\cite{widrow_adaptive_nodate}). This was also the time when a geometrical model k-nearest neighbors was introduced. Some of these models are similar or iterations of their predecessors(for example, adaline can be considered as an upgrade to the perceptron model), while others explore different fields of mathematics, such as KNN which has already explored the area of geometry for machine learning. Clustering is another example which uses a geometrical approach for classifications. Then there are powerful classifiers like support vector machines (\cite{Cortes1995SupportVectorN}) which are a giant leap in computational efficiency and accuracy. After an extensive study of existing classifiers, it seemed reasonable to attempt to build a classifier which combines the geometrical aspect and iterative nature of the aforementioned classifiers to create a new classifier with, of course, an acceptable accuracy. 

\section{Problem Statement}
The basic idea is to combine the geometrical application of machine learning an iterative approach. The problem is that a proposed algorithm needs to have a mathematical basis, it needs to be sufficiently distinct from existing algorithms, and lastly, it needs to be tested and proved to have an acceptable level of accuracy.

Hence, in this paper we shall look to answer the following three questions: 

\begin{enumerate}
    \item Is the proposed algorithm mathematically and logically sound?
    \item Is the proposed algorithm different from existing algorithms and if there are similar approaches, then what are the overlapping areas.
    \item How does the algorithm perform on different standard and popular datasets? 
\end{enumerate}

To begin with answering these questions, we shall first take a look at how the algorithm proceeds with initialization, training and prediction.

\section{Moving Points Algorithm}

To lay the foundation of a new classifier, let us first establish some conjectures for the cartesian plane;
\begin{conjecture}
Let there be two sets of points $X_{0}$ and $X_{1}$, such that there exist no two line segments that can be drawn from a set of four distinct points with two and only two points from one set, that will intersect unless the two endpoints of one segment belong to different sets, then:    
 \begin{enumerate}
     \item There will exist at least one line $l_1$that will divide the plane into two (excluding the set of points on the line) disjoint sets $A$ and $B$ such that $X_{0} \subset A$ and $X_{1} \subset B$.
     \item No line segment drawn by joining two points of the same set from $X_{0}$ or $X_{1}$ shall meet with the line $l_1$.
     \item Any line segment drawn by joining two points from the different sets $X_0$ and $X_1$ shall always meet the line $l_1$
 \end{enumerate}
    
\end{conjecture}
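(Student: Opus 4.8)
The plan is to reduce all three items to a single geometric fact: that the convex hulls $\mathrm{conv}(X_{0})$ and $\mathrm{conv}(X_{1})$ are disjoint. Once that is in hand, item~1 is immediate from the separating hyperplane theorem. Since $X_{0}$ and $X_{1}$ are finite, their hulls are compact convex polygons, and two disjoint compact convex sets in the plane admit a strictly separating line $l_{1}$: one open half-plane $A$ contains $\mathrm{conv}(X_{0})$ and the other open half-plane $B$ contains $\mathrm{conv}(X_{1})$. Because $X_{0}\subseteq\mathrm{conv}(X_{0})\subset A$ and $X_{1}\subseteq\mathrm{conv}(X_{1})\subset B$, this is exactly the asserted separation.

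Items~2 and~3 then follow essentially for free from convexity, so I would dispatch them quickly. For item~2, a segment joining two points of $X_{0}$ lies inside the convex set $A$ (a half-plane is convex), so it cannot meet its bounding line $l_{1}$; the identical argument handles $X_{1}\subset B$. For item~3, a segment joining $p\in X_{0}\subset A$ to $q\in X_{1}\subset B$ has endpoints strictly on opposite sides of $l_{1}$; the signed distance to $l_{1}$ is an affine function of the parameter along the segment, takes opposite signs at the two endpoints, and therefore vanishes exactly once, giving a unique crossing. Thus the whole substance of the conjecture is concentrated in the hull-disjointness step.

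That step is where the real work—and the main obstacle—lies, and I would attack it by contradiction. If $\mathrm{conv}(X_{0})\cap\mathrm{conv}(X_{1})\neq\emptyset$, I want to manufacture four points, two from each set, whose same-set segments cross, contradicting the hypothesis. When the two hull boundaries properly cross this is immediate: an edge of $\mathrm{conv}(X_{0})$ is a segment between two $X_{0}$-points and an edge of $\mathrm{conv}(X_{1})$ is a segment between two $X_{1}$-points, and these two same-set segments intersect. The delicate case is nested hulls, say $\mathrm{conv}(X_{0})\subseteq\mathrm{conv}(X_{1})$ with the boundaries not crossing, where I would have to force a chord of $X_{1}$ through a pair of $X_{0}$-points.

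I expect the nested case to be a genuine gap rather than a mere technicality. The hypothesis as stated only forbids \emph{same-set} segments from crossing, and one can seat two points of $X_{0}$ deep inside $\mathrm{conv}(X_{1})$—for example in the small central cell cut out by the diagonals of a convex $X_{1}$-polygon—so that no $X_{0}$-segment meets any $X_{1}$-segment even though the hulls are not disjoint. Consequently, before the argument can close, I would strengthen the premise to the standard separability condition, namely that \emph{no point of either set lies in the convex hull of the other}. Combined with the edge-crossing observation, this covers both ways two convex polygons can meet (a vertex of one inside the other, or crossing edges), forcing the contradiction; the hulls are then disjoint and items~1--3 follow as above.
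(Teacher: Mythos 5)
The paper offers no proof of this statement: it is explicitly labelled a conjecture and is used as an axiom for the rest of the algorithm, so there is no in-paper argument to compare yours against. Taken on its own terms, your proposal is sound, and your central diagnosis is the important point: the stated hypothesis only forbids a segment joining two points of $X_0$ from meeting a segment joining two points of $X_1$, and that is strictly weaker than linear separability. Your ``nested hulls'' worry is not a technicality but an outright counterexample. Take $X_1$ to be the four vertices of a square and $X_0$ to be two points placed strictly inside one of the four triangular cells cut out by the square's diagonals; the unique $X_0$-segment then meets none of the six $X_1$-segments, so the hypothesis is satisfied, yet $X_0\subset\mathrm{conv}(X_1)$ and no line can put $X_0$ and $X_1$ into opposite open half-planes. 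Hence item 1 is false under the premise as written, and no proof attempt can close without strengthening it.

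With your strengthened premise (no point of either set lies in the convex hull of the other, which together with the no-crossing condition amounts to linear separability of finite sets), the argument does close: vertex containment and edge crossing are the only two ways two convex polygons can meet, both are excluded, so the hulls are disjoint compact convex sets; strict separation gives item 1, convexity of the open half-planes gives item 2, and the intermediate value theorem applied to the signed distance along a cross-set segment gives item 3. Only minor degeneracies (hulls that are single points or segments, tangential rather than transversal boundary contact) need an extra sentence. I would recommend the paper adopt your corrected hypothesis: the abstract already restricts attention to linearly separable data, and the justification of the learning step in Section 3.2 silently relies on exactly the separation your repaired statement provides.
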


\subsection{Initialization Algorithm}
Since the postulate states that a line segment joining points from different sets will always meet, then we can be sure that a line segment joining the medians of the two datasets $X_0$ and $X_1$ will intersect a decision boundary. Why, is shown as follows:

    Let us consider the sets $X_0$ and $X_1$ as sample spaces $\Omega_0$ and $\Omega_1$. $X_1, X_2\dots,X_n$ $\sim$ $\delta_i$ $\forall$ $i \in [P_1, P_n$] where $P_1, P_2, P_3, \dots$ are points in $\Omega_0$. Similarly, $Y_1, Y_2\dots,Y_m$ $\sim$ $\delta_j$ $\forall$ $j \in [Q_1, Q_m$] where $Q_1, Q_2, Q_3, \dots$ are points in $\Omega_1$.

If $X_1, X_2 \dots, X_n$ are IID, then we can say that\footnote{Theorem 5.6, \cite{wasserman2010statistics}};
\[\overline{X}_{n} \xrightarrow{P} \mu_{X_0} \] 
Which can be interpreted as \textit{the distribution of $\overline{X}_{n}$ becomes more concentrated around $\mu$ as $n$ gets large.}

Since we can calculate $\overline{X}_{n}$ from our sample, we can say that the expected value of total populace will lie somewhere in proximity. Thus, by minimizing collinearity between the decision boundary $l_1$ and line segment joining $\overline{X}_{n}$ (or other measures of central tendency), we reduce collinearity between expectation $\mu_{X_0}$ and $l_1$ greatly. Trivially, the sample and the total populace would be distributed uniformly around $\mu_{X_0}$. Hence, by reducing the collinearity, we increase the accuracy of our decision boundary. 

Same can be said for $\overline{Y}_{m}$;
\[\overline{Y}_m \xrightarrow{P} \mu_{X_1}\]

The term collinear is used for classification. Since we want to "reduce" collinearity, we need to use a measure for \textit{degree of collinearity}. Such a measurement can be done using the angle that two segments(or line and segment) subtend and their proximity(distance).

\begin{definition}
Two line segments are said to be near collinear if the angle subtended by them and their proximity is below an arbitrary threshold. With them being collinear if both values are 0.
\end{definition}
To extend this definition to our line $l_1$ and line segment joining mean of sets $X_0$ and $X_1$(say $\overline{AB}$, we can consider Euclid's second postulate\cite{noauthor_euclids_1482}: \textit{Any straight line segment can be indefinitely extended in a straight line.}
Let us consider the line segment $\overline{AB}$ as $\Vec{s}$ and a line segment $\Vec{r}$ on $l_1$ such that $\Vec{r} \cap \Vec{s} \neq \phi$
Therefore the angle $\theta$ between $\Vec{s}$ and $\Vec{r}$ is given by:

\[    \theta = \cos^{-1}\bigg[\frac{\Vec{s} \cdot \Vec{r}}{|\Vec{s}|\cdot |\Vec{r}|}\bigg]\]

The maximum value of $\cos^{-1}$ is $\pi$ but we need to remember that there are two angles when line intersects. Both angles are supplementary hence to maximize both, each needs to be $\frac{\pi}{2}$.

The above section shows why a line $l_1$ passing through line segment joining the segment means and perpendicular to it is a good choice for initialization of two random points. 

\subsection{Learning Algorithm} \label{sec:sec32}
After initialization, we will have two points $E \equiv (x_1, y_1)$ and $F \equiv (x_1, y_2)$ which can be used to write equation of a lines as:
\[
    y - y_1 = \frac{y_2 - y_1}{x_2 - x_1}(x - x_1)
\]
\[
    y = \frac{y_2 - y_2}{x_2 - x_1}(x - x_1) + y_1
\]
\[
    (x_2 - x_1)y = xy_2 - x_1y_2 - y_1x + y_1x_1 + y_1x_2 - y_1x_1    
\]
\begin{equation}
    (x_2 - x_1)y = (y_2 - y_1)x - x_1y_2 + y_1x_2
\end{equation}    

compare equation (1) to $Ax + By + C$. Putting the values of the mean points (or median) we will get one positive and one negative value, which will be used to assign pseudo class $\in \{-1, 1\}$ to the sets $X_0$ and $X_1$. The sign of pseudo class should be similar to that of their respective outputs.

Then we calculate the displacement $d$ value of each point $T \equiv (x_0, y_0)$.
\begin{equation}
    d = \frac{Ax_0 + By_0 + C}{\sqrt{A^2 + B^2}}
\end{equation}
    
Using values from equation (1), we can write the equation of displacement as:
\begin{equation}
    d = \frac{(y_1 - y_2)x_0 + (x_2 - x_1)y_0 + (x_1y_2 - x_2y_1)}{\sqrt{(y_1 - y_2)^2 + (x_2 - x_1)^2}}
\end{equation}
    
If $d \times$ pseudo class $< 0$, we can infer that the classification is incorrect. Let this quantity be $\lambda$.
Thus $\lambda$ has a information about whether the classification is correct or not and also about the degree of misclassification(points further away or with higher displacement are more misclassified than the nearer ones). Hence we can move the decision boundary in an adaptive manner. 
\begin{enumerate}
    \item Upon iterating through the dataset, suppose we encounter a point $Q$ which was misclassified. Let the position vector of that point be $\Vec{d}$.
    \item Out of the two points $E$ and $F$, we will calculate which point is closer to point $Q$. Let the position vector of this point be $\Vec{c}$.
    \item We will use these two position vectors to calculate the vector joining these points;
    
        \[\Vec{u} = \Vec{c} - \Vec{d}\]
    \item Now we select a random point from the near-cluster\footnote{near cluster is a set of points at a specified distance metric from the sample mean. Random points are selected to suppress the effects of outliers in learning.} and let this point have the position vector $\Vec{g}$. Using this we will calculate another vector $\Vec{w}$ as;
    \[
        \Vec{w} = \Vec{g} - \Vec{d}
    \]
\end{enumerate}

Now we can calculate the displacement vector $\Vec{v}$ (not related to displacement value $d$) as:
\[
    \Vec{v} = \vec{w} - \Vec{u}
\]
\begin{figure}[h]
    \centering
    \includegraphics[width=0.45\textwidth]{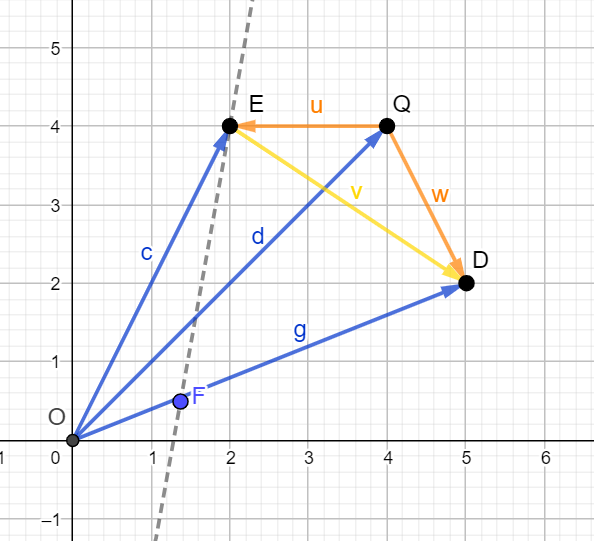}
    \caption{Graphical representation of Calculation of displacement vector}
    \label{fig:graphical_representation_of_movement}
\end{figure}

Then the unit displacement vector which tells us the \textit{direction} of movement is given by: 
\[
    \hat{v} = \frac{\vec{v}}{|\Vec{v}|}
\]
the displacement factor which tells us the magnitude of movement (or update) is given by $|\eta \times \lambda|$. 
The addition or \textit{movement vector} is given by:
\[
    \Vec{t} = \hat{v} \times |\eta \times \lambda|
\]
Finally the new position vector $\Vec{n}$ of the migrating point will be given by: 
\[
    \Vec{n} = \Vec{a} + (\hat{v} \times |\eta \times \lambda|) 
\]
The reason why this would work is again based on proximity. Conjecture 1(1) states that $X_0 \subset A$, $X_1 \subset B$ and $A \cap B = \phi$. Therefore a misclassification would imply a situation such that $Q \in X_0$ and $Q \in  B$. If the data is linearly separable then in order to correct the classification, decision boundary should move in such a way that the resulting boundary satisfies $Q \notin B$. For that to happen, the decision boundary(or its points) shall move in such a way that it moves towards the sample mean of class whose points are a subset of $B$ (for a more rigorous explanation, see Appendix A).

There are various ways to move the decision boundary in a manner that satisfies the above conditions. For example by moving points along vector joining the sample means of \textit{both class}. Or simply move the point towards the sample mean of opposite class. The reason for using the method described is that it takes into account the relative position of point that was misclassified. Therefore if an outlier is misclassified, the model would update accordingly.

\subsection{Over-fitting Cases}
Like all classifiers, the MPA can also suffer from over training. As the position of points is updated, They are displaced towards the central tendency of the class they misclassified. This also gives them a displacement towards each other since. The closer two points are, the more drastically the orientation of the line made by joining them changes with change in the relative position of the two points. Therefore if the model trains for a long time, two points will get too close to each other and the decision boundary will start swinging.

\begin{figure}[h]
    \centering
    \includegraphics[width=0.45\textwidth]{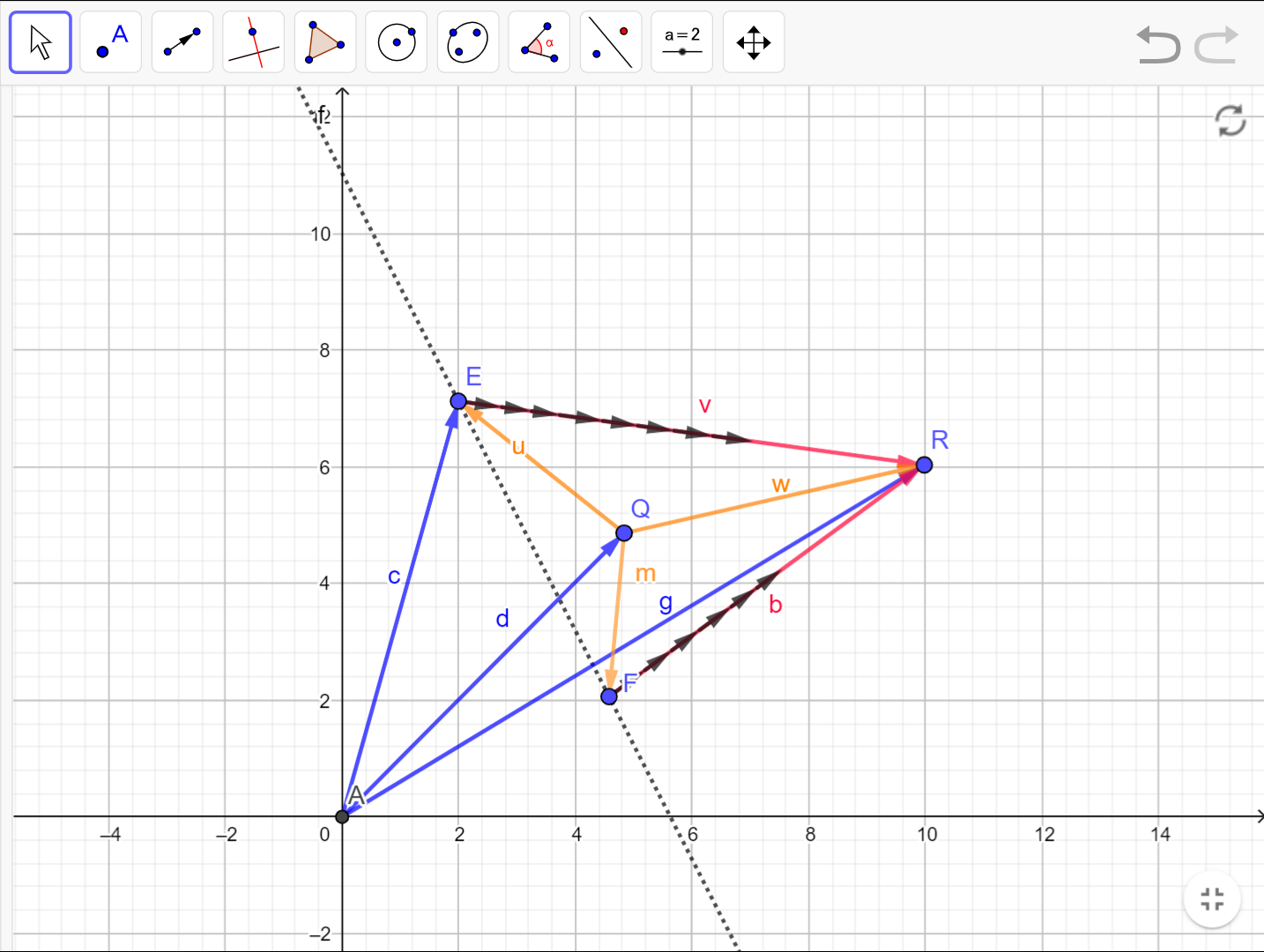}
    \caption{Two moving points start moving close to each other as the model trains.}
    \label{fig:path_mapping}
\end{figure}
The above figure shows how two migrating points $E$ and $F$ might move along the respective displacement vectors $\Vec{v}$ and $\Vec{b}$. The misclassified point is point $Q$ and $R$ is a random point in the near cluster of opposing class. It can be observed that the points get drawn closer as the model trains. 

This problem becomes even more prominent if the datasets are not linearly separable. In this case, after the decision boundary reaches an optimum position, the moving points say, $E$ and $F$ will effectively have a displacement along the line connecting the central tendencies equal to 0\footnote{Upon reaching a position in case of non-linearly separable datasets where no further improvement is possible it is probable that the model will mis-classify a near equal number of examples hence the displacement towards either class gets cancelled out by a movement towards the opposite class.}.
But, the displacement towards the line segment joining central tendencies is in the same direction. Hence the two points will converge. 
\begin{figure}[h]
    \centering
    \includegraphics[width=0.45\textwidth]{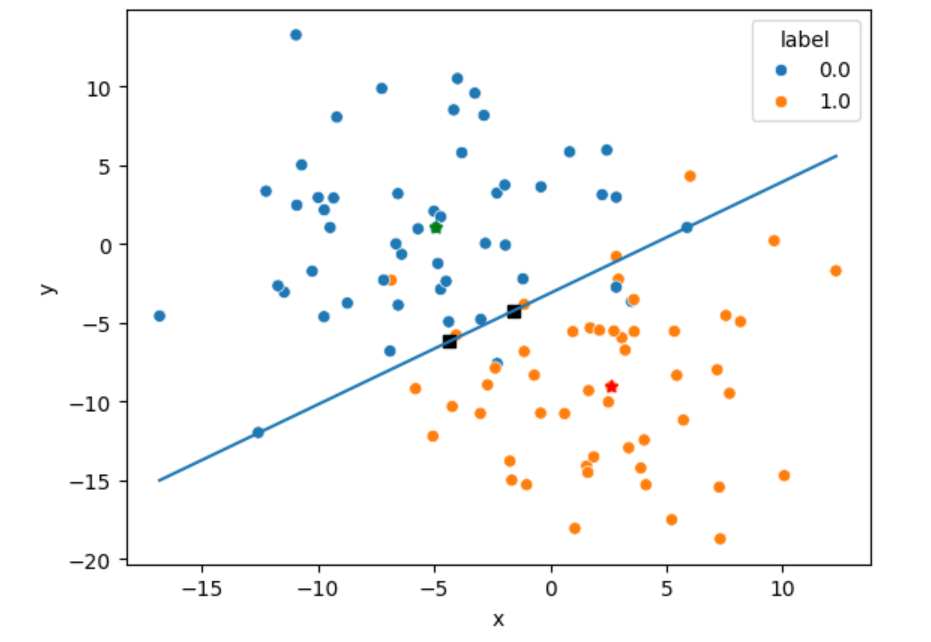}
    \caption{Model trains over a non-linearly separable dataset for 150 epochs}
    \label{fig:150_epochs_test}
\end{figure}

\begin{figure}[h]
    \centering
    \includegraphics[width=0.45\textwidth]{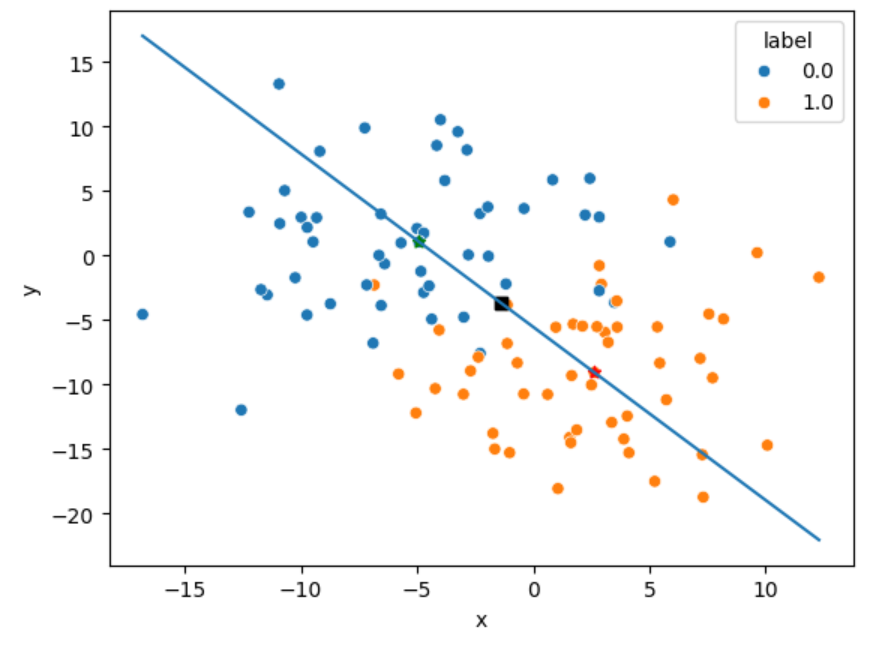}
    \caption{Same model as above trains over the same dataset for 5000 epochs}
    \label{fig:5000_epochs_test}
\end{figure}

\autoref{fig:150_epochs_test} and \autoref{fig:5000_epochs_test} compare the position of the two moving points after training over the same dataset with same initialization for 150 and 5000 epochs. As can be seen, when we kept updating the model for a high number of epochs, the points didn't move around much but they did converge at a point\footnote{This will not always be the result when training the model through extremely high number of epochs. In some cases the higher number of epochs will not affect the position of moving points much. This is case specific.} and due to this, the decision boundary changed drastically to yield incorrect results because the effects of randomness in are more pronounced. 

In order to address this flaw we can update the algorithm with an addendum using , such that after the movement vector $\Vec{b}$ is calculated, the algorithm checks the distance between both the migrating points used to draw the decision boundary and if the distance is less than\footnote{Less than is included because it is possible that the previous update caused the distance between the moving points to fall below the threshold.} or equal to a predefined threshold $\alpha$ (say) then the unit displacement vector is adjusted such that the point which is to be moved along $\Vec{b}$, moves along $\Vec{b}'$.  
\[
    f(y) = \begin{cases}
		\Vec{b}' = \Vec{b} - (\hat{EF} \times |\Vec{b}|\cos{\delta}) & \delta \leq 90\si{\degree}\\
		\Vec{b}' = \Vec{b} & \delta > 90\si{\degree} 
	\end{cases} 
\]
Where $\delta$ is the angle between $\Vec{b}$ and $\overrightarrow{FE}$ 
What this does is simply cancel out the displacement of the points towards each other if they come too close, judged by an arbitrary threshold $\alpha$. This is done only  if the angle between calculated displacement vector and $\overrightarrow{FE}$ is less than $90\si\degree$ because if the angle is greater than $90\si\degree$ the points would naturally move further away.

\subsection{Higher Dimensional Datasets} \label{sec:sec34}
We only discussed about a dataset that has two dimensions. The method to find decision boundary was also explicit to a line that can only separate points on a plane. What about datasets like:

\[
    D = 
    \begin{bmatrix}
    x_1^{[1]} & x_2^{[1]} & x_3^{[1]} & \dots & x_n^{[1]} \\
    x_1^{[2]} & x_2^{[2]} & x_3^{[2]} & \dots & x_n^{[2]} \\
    x_1^{[3]} & x_2^{[3]} & x_3^{[3]} & \dots & x_n^{[3]} \\
    \vdots & \vdots & \vdots & \ddots & \vdots \\
    x_1^{[m]} & x_2^{[m]} & x_3^{[m]} & \dots & x_n^{[m]} \\
    \end{bmatrix}
\]
where $m$ is the number of examples and $n$ is the number of features.

In order to deal with data points described in n-dimensions, we need to first establish a few points.

\begin{enumerate}
    \item If we have a dataset of n features, then that dataset can be mapped to an ambient space of n dimensions.
    \item It would be meaningless to define an hyperplane of m dimensions in q dimensions such that $m \geq q$. Therefore every $m$ dimension hyperplane is defined or "embedded"in $m+1$ or more dimensions.   
\end{enumerate}

Hence, we shall state: 
\label{stmt1}A hyperplane of $n-1$ dimensions can separate an ambient space of n dimensions into two regions of infinite extent \footnote{Just like a plane has an infinite area, 3-D space has infinite volume, n-dimensional space would have an infinite "extent" which does not have a specific quantity}(See Theorem 3 in Appendix A).

Therefore in order to accomplish the task of making a decision boundary, we just need to find the equation of this hyperplane of n-1 dimensions. The general equation of such a hyperplane is given by: 
\[
    a_1x_1 + a_2x_2 + \dots a_nx_n = c 
\]
where $c$ is a constant. This is analogous to equation (2). 

The distance $d$ between a point and a hyperplane in is given by:
\[
    d = \frac{|a_1x_1 + a_2x_2 + a_3x_3 ... + a_nx_n|}{\sqrt{a_1^2 + a_2^2 + a_3^2 ... + a_n^2 }}
\]

Now before we move to how we will calculate the equation of the hyperplane from a set of given points, we shall discuss the the nature of the points required to define a hyperplane of n-1 dimensions embedded in an ambient space of n dimensions: 
\begin{enumerate}
    \item To describe a hyperplane of n-1 dimensions, you need at least n points.
    \item The set of n points must not lie on the same n-2 hyperplane.
\end{enumerate}

Earlier we used an equation specific to lines to find out the equation of the line that would pass through two points and form a decision boundary. Now we shall look at the method that will give us the equation of any n-1 dimensional hyperplane embedded in an n-dimensional hyperspace using n points.

Let each point that will be used to draw the hyperplane be of the type $t = (x_1^{[1]}, x_2^{[1]}, \dots, x_n^{[1]})$. We need to have n such points to describe the hyperplane. To find the equation solve the determinant\footnote{The reasoning behind this method is explained well in this wonderful answer\cite{amd_answer_2018} by user amd on Mathematics Stack Exchange}:
\[
    det
    \begin{bmatrix}
    x_1 & x_2 & x_3 & \dots & x_n & 1 \\
    x_1^{[1]} & x_2^{[1]} & x_3^{[1]} & \dots & x_n^{[1]} & 1 \\
    x_1^{[2]} & x_2^{[2]} & x_3^{[2]} & \dots & x_n^{[2]} & 1 \\
    \vdots & \vdots & \vdots & \ddots & \vdots & \vdots \\
    x_1^{[m]} & x_2^{[m]} & x_3^{[m]} & \dots & x_n^{[m]} & 1\\
    \end{bmatrix} = 0
\]

Solving the determinant would give us an equation of the form $w_1x_1 + w_2x_2 + w_3x_3 \dots + w_nx_n + c = 0$ where c is a constant term (usually a sum of constants values). Comparing this equation to the general equation of hyperplane (equation (2)), we get $w_1 = a_1, w_2 = a_2, w_3 = a_3 \dots w_n = a_n$. This therefore givers us the coefficients of the hyperplane equation, using which we can apply the same algorithm as discussed in the case of 2-D datasets.  

\subsection{Performance Evaluation}
We evaluated the model against a Perceptron, a KNN model and a SVC model initialized using Google's sklearn library (\cite{pedregosa_scikit-learn_2011}). To get a generalized performance we calculated the mean errors over 500 synthetic datasets each containing two 2 feature columns and one true class label column  without changing the parameters. Both Perceptron and SVC class have adaptive learning rate feature, which is absent in our classifier implementation. 
The 500 datasets were obtained using make\_blobs method of sklearn. Integer values from 0-50 were used as random seeds and further the standard deviation of these 50 datasets was varied over values from 1.0 to 2.0 which gave us 10 datasets from each one randomly generated dataset. Hence a total of 500 datasets. The Perceptron was initialized with (random\_state = 8). The KNN model was initialized with (n\_neighbors = 3, n\_jobs = -1 ).

The results of the performance evaluation are as follows: 
\begin{table}[h!]
\centering
\begin{tabular}{|c|c|c|c|}
\hline
Algorithm   & Training Accuracy & Test Accuracy & Generalization Gap \\
\hline
KNN         & 0.9834            & 0.9705        & 0.0129             \\
\hline
SVM         & 0.9774            & 0.9735        & 0.0039             \\
\hline
MPA         & 0.9762            & 0.9760        & 0.0002             \\
\hline
Perceptron & 0.9581            & 0.9538        & 0.0043             \\            
\hline
\end{tabular}
\caption{Performance of 4 classifiers on 500 synthetic datasets.}\label{tabel1}
\end{table}

The similar errors of SVC and our classifier are expected since they are basically doing the same thing i.e creating a hyperplane. Only the methods used to create the hyperplanes differ.  

\begin{figure}[h]
    \centering
    \includegraphics[width=0.45\textwidth]{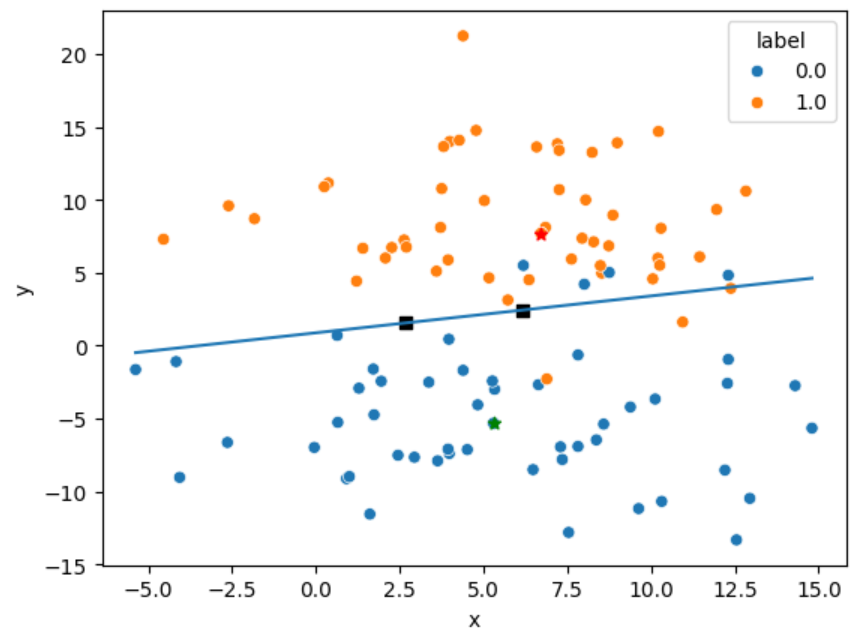}
    \caption{The decision boundary calculated by moving points algorithm on a synthetic dataset.}
    \label{fig:fit}
\end{figure}

Upon testing the custom made implementation of the algorithm on the the Iris dataset \cite{iris_53} to classify the samples of Iris-setosa and Iris-versicolor with 50 samples of each class, it was found that the algorithm was able to classify the linearly separable dataset in two dimensions with 100\% accuracy. The feature vectors selected are: SepalWidthCm and SepalLengthCm.

\begin{figure}[h]
    \centering
    \includegraphics[width=0.45\textwidth]{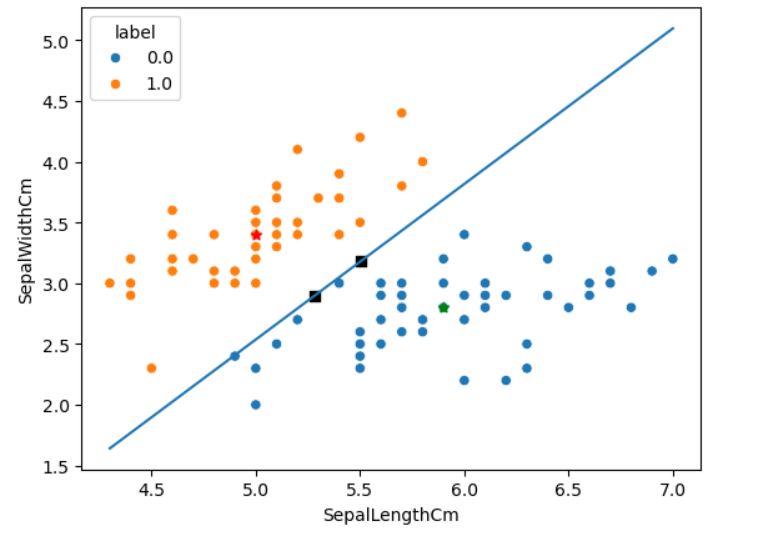}
    \caption{The decision boundary calculated by the algorithm on Iris dataset, label 0: Iris-versicolor, label 1: Iris-setosa.}
    \label{fig:setosaversicolor}
\end{figure}

To test the generalization abilities of the algorithm, it was tested on three standard datasets; Pima Indians diabetes dataset, Iris dataset, Penguins species dataset. 
The results of the model were compared to the results obtained with SVM using a linear kernel on training and testing dataset with test size of 0.2, and 3 features obtained using principal component analysis. The results were compared against SVM with linear kernel to understand how well the decision boundary found using this method is able to classify a given data. A linear SVM also finds a linear boundary in the same hyperspace so it serves as a good reference for analysis of our results. Since SVM is a proven algorithm, if the results are similar then we can be sure that the proposed algorithm works. 
The results\footnote{For training we used approach similar to Section 3.2 and not Section 3.4} are given in Table 2 (Appendix B).

\subsection{Outlook and Further Developments}
\subsubsection{Polynomial Interpolation}
Two popularly used interpolation techniques are Lagrange interpolation \cite{farmer_lagranges_2018} and Newton's divided difference interpolation formula \cite{das_newtons_2016}. However both of these techniques are susceptible to Runge's phenomenon, which refers to the inaccurate oscillations in the polynomial around the boundaries of the interval. The critical effect it can have on the predictions in machine learning can be seen from \cite{noauthor_explore_2018}. 

The Runge's phenomenon can be avoided if we use Chebyshev distribution to calculate the polynomial by concentrating several points of towards the end of the distribution but that would increase the training time and at the same time make the calculation of polynomial expensive. Another down side is that whatever amount of points we use to ensure that the function does not oscillate do not add to the information about underlying patterns in data. To understand what this means let us state an important mechanism about the working of this algorithm. The algorithm essentially has two parts:

\begin{itemize}
    \item The Points - they are the objects that contain the information about the underlying patterns.
    \item Method of Inference - basically the method that we use to draw or \textit{find} the decision boundary.  
\end{itemize}

For eg. when we defined the algorithm for strictly making a linear decision boundary, the steps for calculating the equation of that boundary (eq. 3) are the method of inference. Similarly, using more than the stipulated $n-1$ points to find the decision boundary of dataset of $n$ dimensions calls for a different method of inference from those points. Once the training ends, the amount of information contained in the points is fixed but how accurately and efficiently we are able to extract that information depends on the method of inference. Coming to the original statement that the increased number of points required to remove the effects of Runge's phenomenon do not add to the information about data pattern contained by the points. It solely focuses on improving the accuracy of method of inference. This can be seen trivially using the following example: 

\begin{figure}[htbp]
    \centering
    \includegraphics[width=0.45\textwidth]{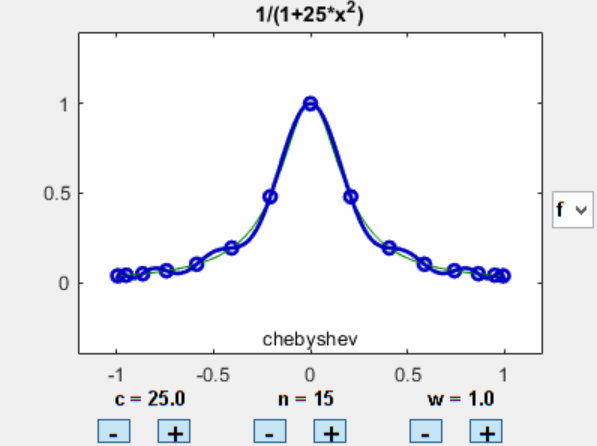}
    \caption{Credit: \cite{noauthor_explore_2018}}
    \label{fig:runge_err}
\end{figure}

15 points were used to fit the polynomial to a satisfactorily, but it can be seen by inspection that these many polynomials are not needed for such a simple graph. Hence, the extra points do not add to the information present in the model. 

\subsubsection{Splines}
This is an alternate method that is used in interpolation problems. It is basically a piece wise function made up of linear, quadratic, cubic or polynomials of any degree. The biggest plus point of this method is that it can avoid Runge's phenomenon even at higher degrees. Hence, in our use case we can make do with a relatively short number of points.


\newpage

\appendix
\section{}

\begin{theorem}
    A hyperplane construct of n - 1 dimensions can divide an ambient space of n dimensions into three disjoint sets $X_{1}$,  $X_{2}$, and $X_{3}$ (say) of points, with one set containing all and only points that satisfy the equation of the hyperplane.
\end{theorem}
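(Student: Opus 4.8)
The plan is to recognize this theorem as the elementary fact that an affine functional partitions its domain according to the sign it takes, so the entire argument rests on the trichotomy law of the real numbers. First I would fix the hyperplane by its equation as given in Section~\ref{sec:sec34}, writing $f(\mathbf{x}) = a_1 x_1 + a_2 x_2 + \cdots + a_n x_n - c$ for a point $\mathbf{x} = (x_1,\ldots,x_n)$ in the ambient space $\mathbf{R}^n$, where the normal vector $\mathbf{a} = (a_1,\ldots,a_n)$ is nonzero (this nondegeneracy is exactly what makes the zero-set an $(n-1)$-dimensional hyperplane rather than the whole space or the empty set). I would then define the three candidate sets directly from the sign of $f$: let $X_3 = \{\mathbf{x} : f(\mathbf{x}) = 0\}$ be the hyperplane itself, let $X_1 = \{\mathbf{x} : f(\mathbf{x}) > 0\}$, and let $X_2 = \{\mathbf{x} : f(\mathbf{x}) < 0\}$.

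The core of the proof is then two short observations. For disjointness and exhaustiveness I would invoke trichotomy: for every point $\mathbf{x}$ the real number $f(\mathbf{x})$ satisfies exactly one of $f(\mathbf{x}) > 0$, $f(\mathbf{x}) = 0$, $f(\mathbf{x}) < 0$. This immediately yields $X_1 \cap X_2 = X_2 \cap X_3 = X_1 \cap X_3 = \emptyset$ together with $X_1 \cup X_2 \cup X_3 = \mathbf{R}^n$, so the three sets form a genuine partition. The clause ``all and only points that satisfy the equation'' is then discharged by the very definition of $X_3$, since $f(\mathbf{x}) = 0$ is precisely the hyperplane equation $a_1 x_1 + \cdots + a_n x_n = c$.

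The one step that needs more than a definition is showing the partition is nontrivial, that is, that each of $X_1$, $X_2$, and $X_3$ is actually nonempty, so the space is really cut into three pieces rather than, say, two. Here I would use the normal vector. Because $\mathbf{a} \neq \mathbf{0}$, the equation $f(\mathbf{x}) = 0$ has at least one solution $\mathbf{x}_0$ (so $X_3 \neq \emptyset$), and travelling along the normal direction gives
\[
    f(\mathbf{x}_0 + t\mathbf{a}) = f(\mathbf{x}_0) + t(a_1^2 + \cdots + a_n^2) = t\,|\mathbf{a}|^2 .
\]
Since $|\mathbf{a}|^2 > 0$, choosing $t > 0$ places a point in $X_1$ and choosing $t < 0$ places one in $X_2$, so both regions are inhabited. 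I expect this nondegeneracy check to be the only genuine obstacle, and it is a mild one; everything else follows from trichotomy. Assembling these facts — pairwise disjoint, jointly covering $\mathbf{R}^n$, with $X_3$ equal to the solution set of the hyperplane equation — completes the argument.
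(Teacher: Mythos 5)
Your proposal is correct and follows essentially the same route as the paper's own proof: both define the three sets by the sign of the affine form $w^{T}x + b$ (your $f(\mathbf{x})$) and conclude disjointness and exhaustiveness from the trichotomy of the reals. The one addition you make --- using the nonzero normal to show via $f(\mathbf{x}_0 + t\mathbf{a}) = t\,|\mathbf{a}|^2$ that all three sets are nonempty --- is a small but genuine improvement, since the paper's proof never verifies that the partition actually has three inhabited pieces.
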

\begin{proof}
By definition a hyperplane is a subspace whose dimensions is one less than the ambient space. Let $w$ be a normal vector normal to the hyperplane $H \in R^{n}$ and $x$ be any point inside the ambient space. Therefore,
\[
    w^{T}.x + b = 0
\]
is the general equation of the hyperplane, where $w = [w_{1}, w_{2}, w_{3}\cdots, w_{n}]$. Now, we can define three sets of points from this equation. Points that satisfy;
\[
    w^{T}.x + b = 0
\]
or,
\[
    w^{T}.x + b < 0
\]
or,
\[
    w^{T}.x + b > 0
\]
No point can satisfy two or more of these equations simultaneously and every point in the ambient space would satisfy one of these equations hence our theorem is true.
\end{proof}

\subsection{Remarks on testing with 3 features}

For three-dimensional data, we can implement code based on the approach proposed in \ref{sec:sec34} but instead, we use an approach similar to \ref{sec:sec32}. Based on \ref{stmt1}, we can conclude that a 2-D plane would be able to separate a 3-D space into 3 disjoint sets. 

The equation of such plane is given by:
\[
    ax + by + cz + d = 0
\]
The displacement, D is given by:
\[
    D = \frac{ax_{n} + by_{n} + cz_{n} + d}{\sqrt{a^2 + b^2 + c^2}}
\]
For calculating the coefficients, utilities from SymPy \cite{10.7717/peerj-cs.103} were used to create a plane from three random points, find equation of that plane and perform the necessary calculations to implement the learning model.  
The rest of the method is the same. 

\subsection{Explanation of decision boundary movement approach}
For the explanation, we will consider a case with three points $P$, $Q$ and $R$ on the cartesian plane such that $P \subset X$ and $\{Q, R\} \subset Y$. Let $l$ be a line on the same plane such that it divides the plane into two sets $S_1$ and $S_2$, with equation $Ax + By + C = 0$. For the purpose of this explanation we will ignore the points on the line $l$. We will only consider case where $Q$ is in the region between $P$ and $R$. The displacement between line and a point $E$ with coordinates $(x_E, y_E)$is given by:   
\begin{equation}
    \lambda(E) = \frac{Ax_E + By_E + C}{\sqrt{A^2 + B^2}}
\end{equation}
Whether two points $G$ and $H$ are in the same region of plane divided by the line is determined by the sign of $\lambda(G)$ and $\lambda(H)$. If they have the same sign then they are in the same subset ($S_1$ or $S_2$). 

Suppose that point $l$ is drawn in such a way that $\{P, Q\} \subset S_1$ and $R \subset S_2$. We want to arrive at a condition such that $P \subset S_1$ and $\{Q, R\} \subset S_2$ so that $l$ can be considered as a decision boundary.  

Since $\{P, Q\} \subset S_1$ we can conclude that the sign of $\lambda(P)$ and $\lambda(Q)$ is the same and the sign of $\lambda(R)$ is the opposite of $\lambda(R)$. In order to correct this, the sign of $\lambda(Q)$ must be reversed. To do this, the line $l$ should move close to $Q$, bringing $\lambda(Q)$ to 0 and then away from it to give it the opposite sign. Since $P$ lies on the same side as $Q$ but further away from $l$, reducing $\lambda(Q)$ would also reduce $\lambda(P)$. Hence, as proposed in Section 3.2, by moving the decision boundary towards the opposite class (represented by its mean), we can move the decision boundary in such a way that it corrects a misclassification.

\section{}

The following table demonstrates training and testing accuracies of moving points algorithm and support vector machines on the same datasets with the same training and testing subsets in each run. 
\begin{itemize}
    \item Iris dataset: 150 samples \cite{iris_53}
    \item Palmer Penguins dataset, 344 samples \cite{penguins_data}
    \item Pima Indians Diabetes Dataset, 768 samples \cite{diabetes_data} 
\end{itemize}
We reduced the dimensionality of the datasets to 3 using PCA from sklearn (\cite{pedregosa_scikit-learn_2011}). 

For SVM, the kernel was set to "linear", other hyperparameters were not tuned, implementation of SVM was taken from sklearn(version 1.5.2). For the proposed algorithm, learning rate was varied between 0.00003 to 0.00008 depending on the dataset for the most optimum performance. All datasets were standardized before training but after splitting into training and testing datasets to prevent data leakage.  

Testing machine details:
\begin{itemize}
    \item Operating System: Ubuntu 24.04.1 LTS
    \item OS Type: 64-bit
    \item Processor: 12th Gen Intel® Core™ i5-12450H × 12
    \item Environment: Jupyter notebook
    \item Language and version: Python 3.12.6
\end{itemize} 

For code, documentation and access to necessary github repositories, contact the author by email. 
\begin{table}[]
\centering
\begin{tabular}{|c|c|c|c|c|c|}
\hline
S.NO    & \multicolumn{2}{c|}{SVM} & \multicolumn{2}{c|}{MPA} & DATASET                                                                                     \\
\hline
        & Training    & Testing   & Training    & Testing  & \multirow{7}{*}{\begin{tabular}[c]{@{}c@{}}PIMA INDIANS\\ DIABETES\\ DATASET\end{tabular}} \\
1.      & 72.63       & 74.67     & 70.35       & 77.27     &                                                                                            \\
2.      & 72.80       & 73.37     & 71.49       & 74.07     &                                                                                            \\
3.      & 72.47       & 67.53     & 73.45       & 67.53     &                                                                                            \\
4.      & 70.84       & 80.51     & 71.63       & 74.02     &                                                                                            \\
5.      & 72.14       & 73.37     & 71.82       & 67.53     &                                                                                            \\
\hline
AVERAGE & 72.17       & 73.37     & 71.82       & 72.08     &                                                                                            \\
\hline
1.      & 96.59       & 95.45     & 94.95       & 95.45     & \multirow{6}{*}{\begin{tabular}[c]{@{}c@{}}PENGUIN \\ DATASET\end{tabular}}                \\
2.      & 96.02       & 93.18     & 95.87       & 84.09     &                                                                                            \\
3.      & 96.02       & 97.72     & 94.49       & 97.72     &                                                                                            \\
4.      & 95.45       & 100.00    & 100.00      & 97.72     &                                                                                            \\
5.      & 95.45       & 97.72     & 95.41       & 95.45     &                                                                                            \\
\hline
AVERAGE & 95.90       & 96.81     & 96.14       & 94.08     &                                                                                            \\
\hline
1.      & 97.5        & 85        & 92.5        & 85        & \multirow{6}{*}{\begin{tabular}[c]{@{}c@{}}IRIS \\ DATASET\end{tabular}}                  \\
2.      & 95          & 95        & 95          & 92.5      &                                                                                            \\
3.      & 92.5        & 100       & 88.75       & 95        &                                                                                            \\
4.      & 97.5        & 85        & 90          & 90        &                                                                                            \\
5.      & 96.25       & 90        & 90          & 90        &                                                                                            \\
\hline
AVERAGE & 95.75       & 91        & 91.25       & 90.5      &              \\    \hline                                                                       

\end{tabular}
\caption{Testing of moving points algorithm and comparison with performance of SVM.}\label{table2}
\end{table}

\newpage
\vskip 0.2in
\bibliography{references}

\end{document}